\newtheorem{theorem}{Theorem}
\newtheorem{lemma}[theorem]{Lemma}
\title{Cross-lingual Distillation for Text Classification}
\author{Ruochen Xu \\
  Carnegie Mellon University \\
  {\tt ruochenx@cs.cmu.edu} \\\And
  Yiming Yang \\
  Carnegie Mellon University \\
  {\tt yiming@cs.cmu.edu} \\}
\date{}
\begin{document}
\maketitle

\begin{abstract}
Cross-lingual text classification(CLTC) is the task of classifying documents written in different languages into the same taxonomy of categories. 
This paper presents a novel approach to CLTC that builds on model distillation, which adapts and extends a framework originally proposed for model compression. Using soft probabilistic predictions for the documents in a label-rich language as the (induced) supervisory labels in a parallel corpus of documents, we train classifiers successfully for new languages in which labeled training data are not available. An adversarial feature adaptation technique is also applied during the model training to reduce distribution mismatch. We conducted experiments on two benchmark CLTC datasets, treating English as the source language and German, French, Japan and Chinese as the unlabeled target languages. The proposed approach had the advantageous or comparable performance of the other state-of-art methods. \footnote{code available at \url{https://github.com/xrc10/cross-distill}}

\end{abstract}

\section{Introduction}
The availability of massive multilingual data on the Internet makes cross-lingual text classification (CLTC) increasingly important. The task is defined as to classify documents in different languages using the same taxonomy of predefined categories. 

CLTC systems build on supervised machine learning require a sufficiently amount of labeled training data for every domain of interest in each language. But in reality, labeled data are not evenly distributed among languages and across domains.  English, for example, is a label-rich language in the domains of news stories, Wikipedia pages and reviews of hotels, products, etc. But many other languages do not necessarily have such rich amounts of labeled data.  This leads to an open challenge in CLTC, i.e., how can we effectively leverage the trained classifiers in a label-rich \textit{source} language to help the classification of documents in other label-poor \textit{target} languages?

Existing methods in CLTC use either a bilingual dictionary or a parallel corpus to bridge language barriers and to translate classification models \cite{xu2016cross} or text data\cite{zhou2016cross}. There are limitations and challenges in using either type of resources. Dictionary-based methods often ignore the dependency of word meaning and its context, and cannot leverage domain-specific disambiguation when the dictionary on hand is a  general-purpose one.  
Parallel-corpus based methods, although more effective in deploying context (when combined with word embedding in particular), often have an issue of domain mismatch or distribution mismatch if the available source-language training data, the parallel corpus (human-aligned or machine-translation induced one) and the target documents of interest are not in exactly the same domain and genre\cite{duh2011machine}. How to solve such domain/distribution mismatch problems is an open question for research.
 
This paper proposes a new parallel-corpus based approach, focusing on the reduction of domain/distribution matches in CLTC.  We call this approach Cross-lingual Distillation with Feature Adaptation or CLDFA in short. It is inspired by the recent work in model compression \cite{hinton2015distilling} where a large ensemble model is transformed to a compact (small) model. The assumption of knowledge distillation for model compression is that the knowledge learned by the large model can be viewed as a mapping from input space to output (label) space. Then, by training with the soft labels predicted by the large model, the small model can capture most of the knowledge from the large model. Extending this key idea to CLTC, if we see parallel documents as different instantiations of the same semantic concepts in different languages, a target-language classifier should gain the knowledge from a well-trained source classifier by training with the target-language part of the parallel corpus and the soft labels made by the source classifier on the source language side. More specifically, we propose to

distillate knowledge from the source language to the target language in the following 2-step process: 
\begin{itemize}
\item Firstly, we train a source-language classifier with both labeled training documents and adapt it to the unlabeled documents from the source-language side of the parallel corpus. The adaptation enforces our classifier to extract features that are: 1) discriminative for the classification task and 2) invariant with regard to the distribution shift between training and parallel data.

\item Secondly, we use the trained source-language classifier to obtain the 
\textit{soft} labels for a parallel corpus, and the target-language part of the parallel corpus to train a target classifier, which yields a similar category distribution over target-language documents as that over source-language documents. We also use unlabeled testing documents in the target language to adapt the feature extractor in this training step.

\end{itemize}
Intuitively, the first step addresses the potential domain/distribution mismatch between the labeled data and the unlabeled data in the source language.  
The second step addresses the potential mismatch between the target-domain training data (in the parallel corpus) and the test data (not in the parallel corpus). The soft-label based training of target classifiers makes our approach unique among parallel-corpus based CLTC methods (Section \ref{sec:CLTC methods}. The feature adaptation step makes our framework particularly robust in addressing the distributional difference between in-domain documents and parallel corpus, which is important for the success of CLTC with low-resource languages.

The main contributions in this paper are the following:
\begin{itemize}
\item We propose a novel framework (CLDFA) for knowledge distillation in CLTC through a parallel corpus. It has the flexibility to be built on a large family of existing monolingual text classification methods and enables the use of a large amount of unlabeled data from both source and target language.
\item CLDFA has the same computational complexity as the plug-in text classification method and hence is very efficient and scalable with the proper choice of plug-in text classifier.
\item Our evaluation on benchmark datasets shows that our method had a better or at least comparable performance than that of other state-of-art CLTC methods.
\end{itemize}

\section{Related Work}
Related work can be outlined with respect to the representative work in CLTC and the recent progress in deep learning for knowledge distillation.

\subsection{CLTC Methods}
\label{sec:CLTC methods}
One branch of CLTC methods is to use lexical level mappings to transfer the knowledge from the source language to the target language. The work by Bel et al. \cite{bel2003cross} was the first effort to solve CLTC problem. They translated the target-language documents to source language using a bilingual dictionary. The classifier trained in the source language was then applied on those translated documents.  Similarly, Mihalcea et al. \cite{mihalcea2007learning} built cross-lingual classifier by translating subjectivity words and phrases in the source language into the target language. Shi et al. \cite{shi2010cross} also utilized a bilingual dictionary. 
Instead of translating the documents, they tried to translate the classification model from source language to target language. 

Prettenhofer and Stein. \cite{prettenhofer2010cross} also used the bilingual dictionary as a word translation oracle and built their CLTC system on structural correspondence learning, a theory for domain adaptation. A more recent work by \cite{xu2016cross} extended seminal bilingual dictionaries with unlabeled corpora in low-resource languages. Chen et al. \cite{chen2016adversarial} used bilingual word embedding to map documents in source and target language into the same semantic space, and adversarial training was applied to enforce the trained classifier to be language-invariant.

Some recent efforts in CLTC focus on the use of automatic machine translation (MT) technology. For example, Wan \cite{wan2009co} used machine translation systems to give each document a source-language and a target-language version, where one version is machine-translated from the another one. A co-training \cite{blum1998combining} algorithm was applied on two versions of both source and target documents to iterative train classifiers in both languages. MT-based CLTC also include the work on multi-view learning with different algorithms, such as majority voting\cite{amini2009learning}, matrix completion\cite{xiao2013novel} and multi-view co-regularization\cite{guo2012cross}.

Another branch of CLTC methods focuses on representation learning or the mapping of the induced representations in cross-language settings \cite{guo2012transductive, zhou2016cross, zhou2015learning,zhouattention, xiao2013novel, jagarlamudi2011improving, de2011knowledge, vinokourov2002inferring, platt2010translingual, littman1998automatic}. For example, Meng et al. \cite{meng2012cross} and Lu et al. \cite{lu2011joint} used a parallel corpus to learn word alignment probabilities in a pre-processing step. Some other work attempts to find a language-invariant (or interlingua) representation for words or documents in different languages using various techniques, such as
latent semantic indexing \cite{littman1998automatic}, kernel canonical correlation analysis \cite{vinokourov2002inferring}, matrix completion\cite{xiao2013novel}, principal component analysis \cite{platt2010translingual} and Bayesian graphical models \cite{de2011knowledge}.

\subsection{Knowledge Distillation}
The idea of distilling knowledge in a neural network was proposed by Hinton et al \cite{hinton2015distilling}, in which they introduced a student-teacher paradigm. Once the cumbersome teacher network was trained, the student network was trained according to soften predictions of the teacher network. In the field of computer vision, it has been empirically verified that student network trained by distillation performs better than the one trained with hard labels. \cite{hinton2015distilling,romero2014fitnets,ba2014deep}. Gupta et al.\cite{gupta2015cross} transfers supervision between images from different modalities(e.g. from RGB image to depth image). There are also some recent works applied distillation in the field of natural language. For example, Lili et al. \cite{mou2015distilling} distilled task specific knowledge from a set of high-dimensional embeddings to a low-dimensional space. Zhiting et al. used an iterative distillation method to transfer the structured information of logic rules into the weights of a neural network. Kim et al. \cite{kim2016sequence} applied knowledge distillation approaches in the field of machine translation to reduce the size of neural machine translation model. Our framework shares the same purpose of existing works that transfer knowledge between models of different properties, such as model complexity, modality, and structured logic. However, our transfer happens between models working on different languages. To the best of knowledge, this is the first work using knowledge distillation to bridge the language gap for NLP tasks.

\section{Preliminary}
\subsection{Task and Notation}
CLTC aims to use the training data in the source language to build a model applicable in the target language. In our setting, we have labeled data in source language $L_{src} = \{ x_i, y_i \}_{i=1}^{L}$, where $x_i$ is the labeled document in source language and $y_i$ is the label vector. We then have our test data in the target language, given by $T_{tgt} = \{ x'_i \}_{i=1}^{T}$. Our framework can also use unlabeled documents from both languages in transductive learning settings. We use $U_{src} = \{x_i\}_{i=1}^{M}$ to denote source-language unlabeled documents,$U_{tgt} = \{x'_i\}_{i=1}^{N}$ to denote target-language unlabeled documents, and $U_{parl}=\{ (x_i, x'_i) \}_{i=1}^P$ to denote a unlabeled bilingual parallel corpus where $x_i$ and $x'_i$ are paired document translations of each other.
We assume that the unlabeled parallel corpus does not overlap with the source-language training documents and the target-language test documents.

\subsection{Convolutional Neural Network (CNN) as a Plug-in Classifier}
We use a state-of-the-art CNN-based neural network classifier \cite{kim2014convolutional} as the plug-in classifier in our  framework. Instead of using a bag-of-words representation for each document, the CNN model concatenates the word embeddings (vertical vectors) of each input document into a $n \times k$ matrix, where $n$ is the length (number of word occurrences) of the document, and $k$ is the dimension of word embedding. Denoting by
$$ \mathbf{x_{1:n}} = \mathbf{x_1} \oplus \mathbf{x_2} \oplus ... \oplus \mathbf{x_n} $$ as the resulted matrix, with $\oplus$ the concatenation operator.
One-dimensional convolutional filter $\mathbf{w}\in R^{hk}$ with window size $h$ operates on every consecutive $h$ words, with non-linear function $f$ and bias $b$. For window of size $h$ started at index $i$, the feature after convolutional filter is given by:
$$c_i = f(\mathbf{w} \cdot \mathbf{x_{i:i+h-1}} + b)$$
A max-over-time pooling \cite{collobert2011natural} is applied on $c$ over all possible positions such that each filter extracts one feature. The model uses multiple filters with different window sizes. The concatenated outputs from filters consist the feature of each document. We can see the convolutional filters and pooling layers as feature extractor $\mathbf{f} = G_f(x, \theta_f)$, where $\theta_f$ contains parameters for embedding layer and convolutional layer. Theses features are then passed to a fully connected softmax layer to produce probability distributions over labels. We see the final fully connected softmax layer as a label classifier $G_y(\mathbf{f}, \theta_y)$ that takes the output $\mathbf{f}$ from the feature extractor. The final output of model is given by $G_y(G_f(x, \theta_f), \theta_y)$, which is jointly parameterized by $\{ \theta_f, \theta_y \}$

We want to emphasize that our choice of the plug-in classifier here is mainly for its simplicity and scalability to demonstrate our framework. There is a large family of neural classifiers for monolingual text classification that could be used in our framework as well, including other convolutional neural networks by \cite{johnson2014effective}, the recurrent neural networks by \cite{lai2015recurrent, zhang2016dependency,johnson2016supervised, sutskever2014sequence, dai2015semi}, the attention mechanism by \cite{yangateen}, the deep dense network by \cite{iyyer2015deep}, and more. 

\section{Proposed Framework}

Let us introduce two versions of our model for cross-language knowledge distillation, i.e., the vanilla version and the full version with feature adaptation. Both are supported by the proposed framework.  We denote the former by CLD-KCNN and the latter by CLDFA-KCNN.


\subsection{Vanilla Distillation}
\label{sec:vani_dist}
Without loss of generality, assume we are learning a multi-class classifier for the target language. We have $y \in {1,2,...,|\mathit{v}|}$ where $\mathit{v}$ is the set of all possible classes.  We assume the base classification network produces real number logits $q_j$ for each class. For example, for the case of CNN text classifier, the logits can be produced by a linear transformation which takes features extracted max-pooling layer and outputs a vector of size $|\mathit{v}|$. The logits are converted into probabilities of classes through the softmax layer, by normalizing each $q_j$ with all other logits.

\begin{equation}
p_j = \frac{\exp(q_j/T)}{\sum_{k=1}^{|\mathit{v}|} \exp(q_k/T) }
\end{equation}
where $T$ is a temperature and is normally set to 1. Using a higher value of $T$ produces a softer probability distribution over classes. 

The first step of our framework is to train the source-language classifier on labeled source documents $L_{src}$. We use standard temperature $T=1$ and cross-entropy loss as the objective to minimize. For each example and its label $(x_i, y_i)$ from the source training set, we have:

\begin{align}
\begin{split}
\label{eq:ce_loss}
& \mathcal{L}(\theta_{src}) = \\ 
& - \sum_{(x_i, y_i) \in L_{src}} \sum_{k=1}^{|\mathit{v}|} \mathbb{1}\{ y_i = k \} \log p(y=k|x_i; \theta_{src})
\end{split}
\end{align}

where $p(y=k|x; \theta_{src})$ is source model controlled by parameter $\theta_{src}$ and $\mathbb{1}\{ \cdot \}$ is the indicator function.

In the second step, the knowledge captured in $\theta_{src}$ is transferred to the distilled model in the target language by training it on the parallel corpus. The intuition is that paired documents in parallel corpus should have the same distribution of class predicted by the source model and target model. In the simplest version of our framework, for each source-language document in the parallel corpus, we predict a soft class distribution by source model with high temperature. Then we minimize the cross-entropy between soft distribution produced by source model and the soft distribution produced by target model on the paired documents in the target language. More formally, we optimize $\theta_{tgt}$ according to the following loss function for each document pair $(x_i, x'_i)$ in parallel corpus.
\begin{align}
\begin{split}
\label{eq:dist_loss}
& \mathcal{L}(\theta_{tgt}) = -\sum_{(x_i, x'_i) \in U_{parl}} \\
& \sum_{k=1}^{|\mathit{v}|} p(y=k|x_i; \theta_{src}) \log p(y=k|x'_i; \theta_{tgt})
\end{split}
\end{align}
During distillation, the same high temperature is used for training target model. After it has been trained, we set the temperature to 1 for testing.

We can show that under some assumptions, the two-step cross-lingual distillation is equivalent to distilling a target-language classifier in the target-language input space.

\begin{lemma}
\label{th:lemma}
Assume the parallel corpus $\{ x_i, x'_i \} \in U_{parl}$ is generated by $x_i' \sim p(X';\eta)$ and $x_i = t(x_i')$, where $\eta$ controls the marginal distribution of $x_i$ and $t$ is a differentiable translation function with integrable derivative. Let $f_{\theta_{src}}(t(x'))$ be the function that outputs soft labels of $p(y=k|t(x'); \theta_{src})$. The distillation given by equation \ref{eq:dist_loss} can be interpreted as distillation of a target language classifier $f_{\theta_{src}}(t(x'))$ on target language documents sampled from $p(X';\eta)$. 
\label{lemma1}
\end{lemma}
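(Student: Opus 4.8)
The plan is to reduce the empirical objective in equation \ref{eq:dist_loss} to a Monte Carlo estimate of an expected distillation loss taken over the target-language input distribution, and then to read off that the teacher in this estimate is exactly the composite classifier $f_{\theta_{src}}(t(\cdot))$ acting on target-language documents. In other words, the whole lemma should follow from a change of variable in the summand together with the law of large numbers.

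First I would use the generative assumption to eliminate the source-language variable. Since every pair in $U_{parl}$ satisfies $x_i = t(x_i')$, the teacher term in equation \ref{eq:dist_loss} rewrites as
\[
p(y=k \mid x_i;\theta_{src}) = p(y=k\mid t(x_i');\theta_{src}) = \big[f_{\theta_{src}}(t(x_i'))\big]_k ,
\]
so that the loss depends on the corpus only through the target-language samples $x_i'$:
\[
\mathcal{L}(\theta_{tgt}) = -\sum_{(x_i,x_i')\in U_{parl}} \sum_{k=1}^{|\mathit{v}|} \big[f_{\theta_{src}}(t(x_i'))\big]_k \, \log p(y=k\mid x_i';\theta_{tgt}).
\]

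Next I would interpret this finite sum statistically. Because each $x_i'$ is drawn i.i.d.\ from $p(X';\eta)$, dividing by the corpus size $P$ and applying the law of large numbers gives, as $P\to\infty$,
\[
\frac{1}{P}\mathcal{L}(\theta_{tgt}) \;\longrightarrow\; -\,\mathbb{E}_{x'\sim p(X';\eta)}\!\left[\sum_{k=1}^{|\mathit{v}|} \big[f_{\theta_{src}}(t(x'))\big]_k \log p(y=k\mid x';\theta_{tgt})\right].
\]
The right-hand side is, term for term, the standard knowledge-distillation objective: the teacher soft labels are $f_{\theta_{src}}(t(x'))$, the student is the target model $p(\cdot\mid x';\theta_{tgt})$, and the data are target-language documents distributed as $p(X';\eta)$. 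Since $f_{\theta_{src}}\circ t$ maps a target document $x'$ directly to a class distribution, it is itself a well-defined target-language classifier, which is precisely the claimed equivalence.

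The delicate point, and where the hypotheses on $t$ earn their place, is making sure this expected loss is well defined and that re-expressing the source-side logits as a target-side teacher is legitimate rather than cosmetic. Differentiability of $t$ together with integrability of its derivative is what guarantees that the pushforward of $p(X';\eta)$ under $t$---the implied marginal of the source documents $x_i$---admits a well-behaved density through change of variables, so that the source-side expectation $\mathbb{E}_{x\sim t_\#p(X';\eta)}$ and the target-side expectation $\mathbb{E}_{x'\sim p(X';\eta)}$ genuinely coincide and $f_{\theta_{src}}(t(x'))$ may be read as a bona fide target-domain teacher rather than a source-domain one. I expect the main obstacle to be stating this regularity precisely: the core Monte Carlo identification needs only measurability, so the work lies in pinning down exactly which statement the smoothness and integrability conditions on $t$ support and in carrying the Jacobian through the change of variables cleanly.
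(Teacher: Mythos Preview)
Your argument is correct, and in fact more careful than what the paper itself offers: the paper states Lemma~\ref{lemma1} without a formal proof, following it only with the one-sentence gloss that $f_{\theta_{src}}(t(x'))$ ``is the classifier that takes input of target documents, translates them into source documents through $t$ and makes prediction using the source classifier.'' Your substitution $x_i = t(x_i')$ into equation~\ref{eq:dist_loss} is exactly the content of the lemma, and the Monte Carlo/LLN step you add is a reasonable way to make the word ``interpreted'' precise, though the paper does not bother with it.

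One point where you work harder than necessary: the differentiability of $t$ and the integrability of $t'$ are not actually used for Lemma~\ref{lemma1}. They are listed among the hypotheses because the paper reuses them in Theorem~\ref{th:theorem}, where the change-of-variables $dx = t'(x')\,dx'$ appears explicitly in passing from the source-side expected loss to the target-side one. For the lemma itself, the deterministic relation $x_i = t(x_i')$ and the sampling assumption $x_i' \sim p(X';\eta)$ are all that is needed; measurability of $t$ would suffice. So the ``delicate point'' you flag about the pushforward density is not an obstacle here---you can drop that paragraph and the proof stands.
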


$f_{\theta_{src}}(t(x'))$ is the classifier that takes input of target documents, translates them into source documents through $t$ and makes prediction using the source classifier. If we further assume the testing documents have the same marginal distribution $P(X';\eta)$, then the distilled classifier should have similar generalization power as $f_{\theta_{src}}(t(x'))$.

\begin{theorem}
\label{th:theorem}
Let source training data $ x_i \in L_{src} $ has marginal distribution $ p(X;\lambda)$. Under the assumptions of lemma \ref{lemma1}, further assume $p(t(x');\lambda) = p(x';\eta)$, $p(y|t(x')) = p(y|x')$ and $t'(x') \approx C$, where $C$ is a constant. Then $f_{\theta_{src}}(t(x'))$ actually minimizes the expected loss in target language data $E_{x'\sim p(X;\eta), y\sim p(Y|x')} [L\big(y, f(t(x'))\big)]$.
\end{theorem}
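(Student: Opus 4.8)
The plan is to show that the classifier $f_{\theta_{src}}(t(x'))$, which routes a target-language document through the translation $t$ and then the source classifier, is the minimizer of the target-language expected loss by transporting the optimality of $\theta_{src}$ from the source space to the target space via a change of variables. By Lemma~\ref{lemma1}, the two-step distillation is equivalent to distilling $f_{\theta_{src}}(t(x'))$ on target documents drawn from $p(X';\eta)$, so it suffices to argue that this composite classifier minimizes $E_{x'\sim p(X';\eta),\, y\sim p(Y|x')}[L(y, f(t(x')))]$. First I would set up the source-language optimality: the source training loss in equation~\eqref{eq:ce_loss} is an empirical version of the expected risk $E_{x\sim p(X;\lambda),\, y\sim p(Y|x)}[L(y, f_{\theta_{src}}(x))]$, so under standard assumptions $\theta_{src}$ is the (population) minimizer of this source-side expected loss.

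Next I would perform the change of variables $x = t(x')$ to rewrite the target-language expected loss as an integral over the source space. Writing the target expectation explicitly as
\begin{equation}
\int L\big(y, f(t(x'))\big)\, p(y \mid x')\, p(x';\eta)\, dx'\, dy,
\end{equation}
I would substitute $x = t(x')$ with Jacobian $t'(x')$, so that $dx' = |t'(x')|^{-1} dx$. The three structural assumptions are exactly what collapse this into the source-side risk: $p(t(x');\lambda) = p(x';\eta)$ matches the pushed-forward marginal, $p(y\mid t(x')) = p(y\mid x')$ matches the conditional label distributions, and $t'(x') \approx C$ makes the Jacobian factor a constant that does not distort which $f$ attains the minimum. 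Under these substitutions the target expected loss becomes (up to the constant Jacobian factor)
\begin{equation}
\int L\big(y, f(x)\big)\, p(y\mid x)\, p(x;\lambda)\, dx\, dy,
\end{equation}
which is precisely the source-language expected risk evaluated at the classifier $f$. Since $f_{\theta_{src}}$ minimizes this source risk, and the change of variables is a bijection that preserves minimizers up to the constant scaling, $f_{\theta_{src}}(t(x'))$ minimizes the target-language expected loss.

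The main obstacle I anticipate is making the change-of-variables argument fully rigorous, because $t$ maps between document (embedding) spaces that may not be of equal dimension or may not be genuinely invertible, so the clean Jacobian substitution and the bijectivity needed to preserve minimizers are really simplifying idealizations rather than literal facts. In particular, the assumption $t'(x') \approx C$ is doing heavy lifting: it is an approximation, not an equality, so the conclusion is that $f_{\theta_{src}}(t(x'))$ \emph{approximately} minimizes the target loss, and I would state the result in that spirit rather than claiming exact optimality. I would also need to be careful that the set over which $f$ ranges is the same on both sides — i.e. that minimizing over source classifiers composed with $t$ corresponds to minimizing over admissible target classifiers — which again holds only under the idealization that $t$ is an invertible correspondence between the two input spaces. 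Given these caveats, the proof is essentially a transport-of-risk argument whose three hypotheses are precisely engineered to align the source and target expected losses term by term.
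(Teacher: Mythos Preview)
Your proposal is correct and uses essentially the same change-of-variables idea as the paper. The only notable difference is the direction: you start from the target-language risk and substitute $x = t(x')$ to land on the source risk, which forces you to invoke $t^{-1}$ and the Jacobian factor $|t'(x')|^{-1}$, hence your worries about invertibility; the paper instead starts from the source risk $\int p(x;\lambda)\sum_y p(y|x)L(y,f(x))\,dx$ and substitutes $x=t(x')$ with $dx = t'(x')\,dx'$ to arrive at the target risk directly, which only needs $t$ differentiable (already assumed in Lemma~\ref{lemma1}) and sidesteps the bijectivity concern entirely. Running the substitution in that direction would let you drop most of your caveats paragraph.
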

\begin{proof}
By definition of equation \ref{eq:ce_loss}, $f_{\theta_{src}}(x)$ minimizes the expected loss $E_{x\sim p(X;\lambda), y\sim p(Y|x)}[L\big(y, f(x)\big)]$, where $L$ is cross-entropy loss in our case. Then we can write
\begin{align*}
& E_{x\sim p(X;\lambda), y\sim p(Y|x)}[L\big(y, f(x)\big)] \\
= & \int p(x;\lambda) \sum_{y} p(y|x) L\big(y, f(x)\big) dx \\
= & \int p(t(x');\lambda) \sum_{y} p(y|t(x')) L\big(y, f(t(x'))\big) t'(x') dx'\\
\approx & C \int p(x';\eta) \sum_{y} p(y|x') L\big(y, f(t(x'))\big) dx'\\
= & C E_{x'\sim p(X;\eta), y\sim p(Y|x')} [L\big(y, f(t(x'))\big)]
\end{align*}

\end{proof}

\subsection{Distillation with Adversarial Feature Adaptation}

\begin{figure}[ht]
\centering
\includegraphics[width=0.4\textwidth]{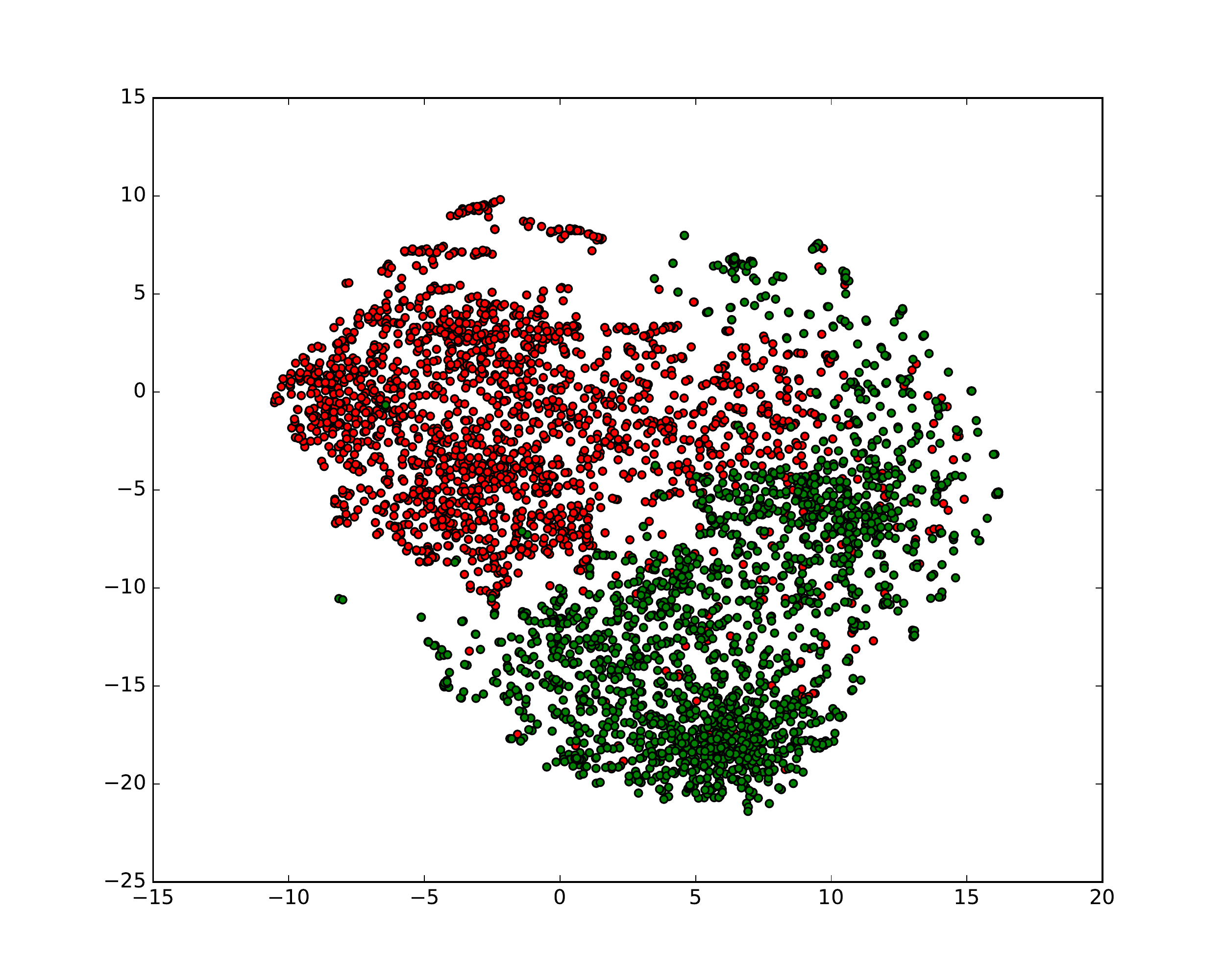}
\caption{Extracted features for source-language documents in the English-Chinese Yelp Hotel Review dataset. Red dots represent features of the documents in $L_{src}$ and green dots represent the features of documents in $U_{parl}$, which is a general-purpose parallel corpus.}
\label{fig:yelp_div}
\end{figure}

Although vanilla distillation is intuitive and simple, it cannot handle distribution mismatch issues. For example, the marginal feature distributions of source-language documents in $L_{src}$ and $U_{parl}$ could be different, so are the distributions of target-language documents in $U_{parl}$ and $T_{tgt}$. According to theorem 2, the vanilla distillation works for the best performance under unrealistic assumption: $p(t(x')|\lambda) = p(x'|\eta)$. To further illustrate our point, we trained a CNN classifier according to equation \ref{eq:ce_loss} and used the features extracted by $G_f$ to present the source-language documents in both $L_{src}$ and $U_{parl}$. Then we projected the high-dimensional features onto a 2-dimensional space via t-Distributed Stochastic Neighbor Embedding (t-SNE)\cite{maaten2008visualizing}.  This resulted the visualization of the project data in Figures \ref{fig:yelp_div} and \ref{fig:amazon_div}. 

\begin{figure*}[ht]
\centering
\subfigure[Germany:DVD]{
\includegraphics[width=.30\textwidth]{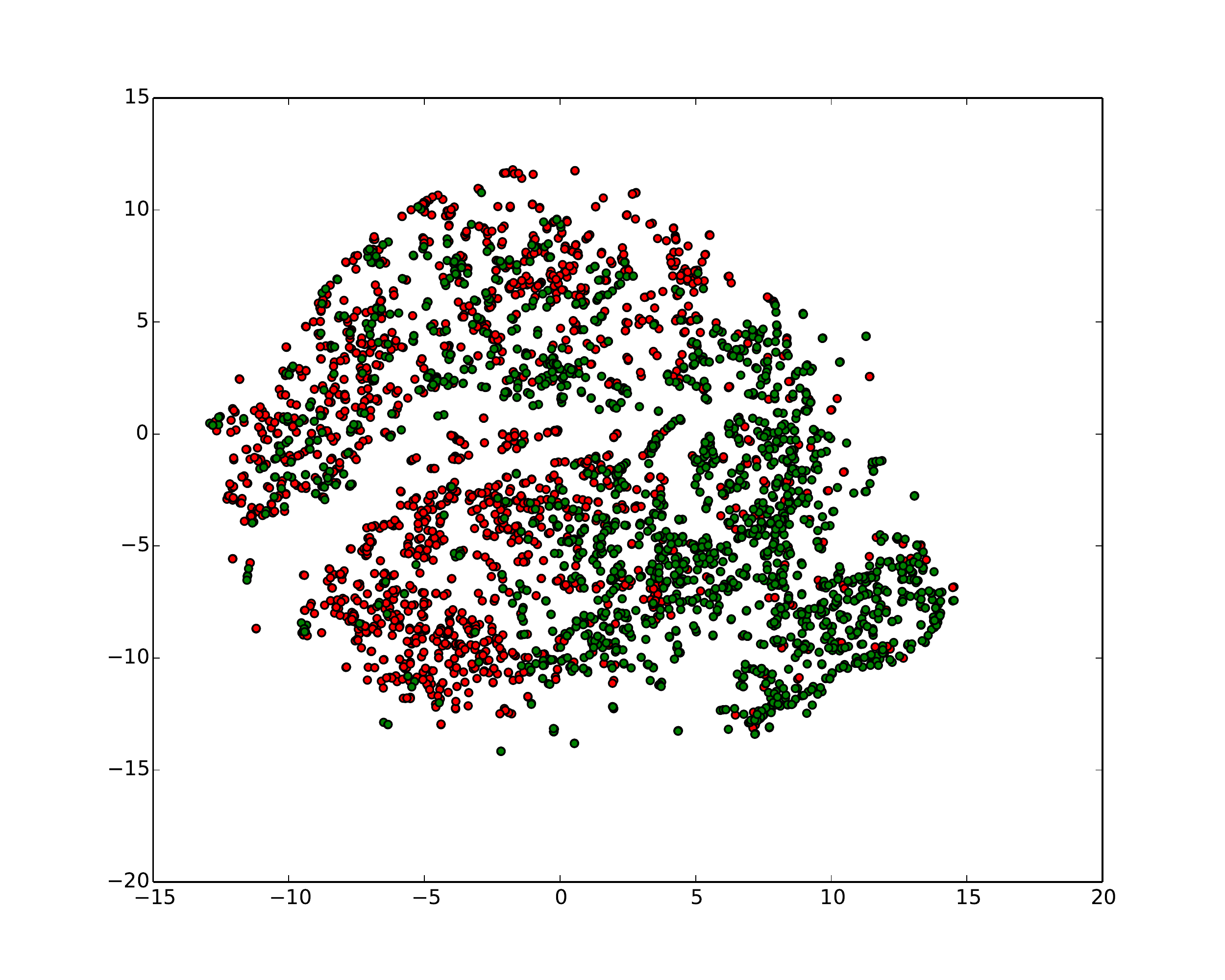}
}
\subfigure[French:Music]{
\includegraphics[width=.30\textwidth]{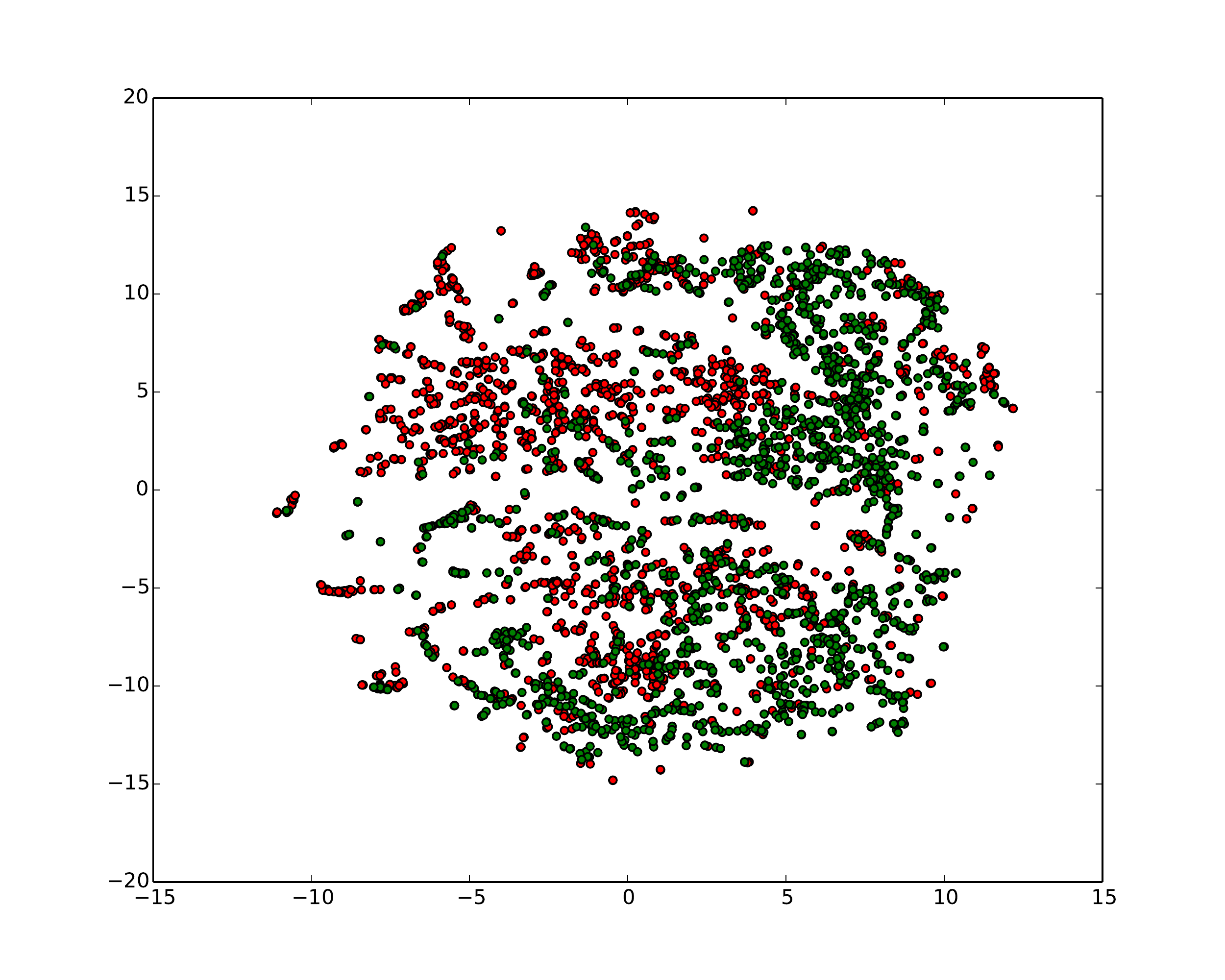}
}
\subfigure[Japanese:Book]{
\includegraphics[width=.30\textwidth]{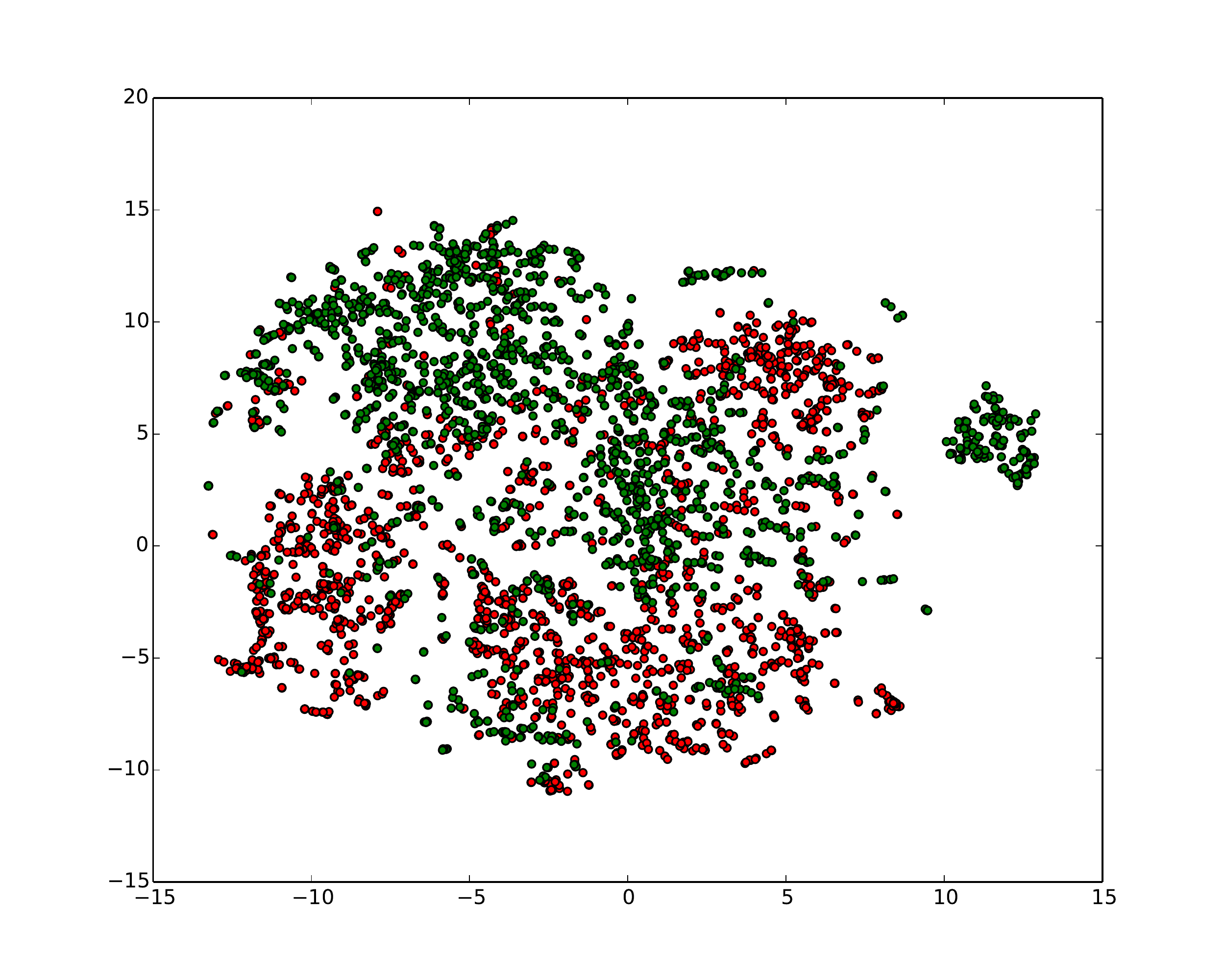}
}

\caption{Extracted features for the source-language documents in the Amazon Reviews dataset. Red dots represent the features of the labeled training documents in $L_{src}$, and green dots represent the features of the documents in $U_{parl}$, which are the machine-translated documents from a target language. Below each figure is the target language and the domain of review (Section \ref{sec:data}).}
\label{fig:amazon_div}
\end{figure*}

It is quite obvious in Figure \ref{fig:yelp_div} that the general-purpose parallel corpus has a very different feature distribution from that of the labeled source training set. Even for machine-translated parallel data from the same domain, as shown in figure \ref{fig:amazon_div}, there is still a non-negligible distribution shift from the source language to the target language for the extracted features. 
Our interpretation of this observation is that when the MT system (e.g. Google Translate) is a general-purpose one, it non-avoidably add translation ambiguities which would lead the distribution shift from the original domain.  To address the distribution divergence brought by either a general-purpose parallel corpus or an imperfect MT system, we seek to adapt the features extraction part of our neural classifier such that the feature distributions on both sides should be close as possible in the newly induced feature space. We adapt the adversarial training method by \cite{ganin2014unsupervised} to the cross-lingual settings in our problems. 

Given a set of training set of $L = \{ x_i, y_i \}_{i=1,...,N}$ and an unlabeled set $U = \{ x'_i \}_{i=1,...,M}$, our goal is to find a neural classifier $G_y(G_f(x, \theta_f), \theta_y)$, which has good discriminative performance on $L$ and also extracts features which have similar distributions on $L$ and $U$. One way to maximize the similarity of two distributions is to maximize the loss of a discriminative classifier whose job is to discriminate the two feature distributions. We denote this classifier by $G_d(\cdot, \theta_d)$, which is parameterized by $\theta_d$.

At training time, we seek $\theta_f$ to minimize the loss of $G_y$ and maximize the loss of $G_d$. Meanwhile, $\theta_y$ and $\theta_d$ are also optimized to minimize their corresponding loss. The overall optimization could be summarized as follows:

\begin{align*}
& E(\theta_f, \theta_y, \theta_d) = \sum_{x_i, y_i \in L} L_y(y_i, G_y(G_f(x_i, \theta_f), \theta_y)) \\
& - \alpha \sum_{x_i \in L} L_d(0, G_d(G_f(x_i, \theta_f), \theta_d)) \\ 
& - \alpha \sum_{x_j \in U} L_d(1, G_d(G_f(x_j, \theta_f), \theta_d))
\end{align*}

where $L_y$ is the loss function for true labels $y$, $L_d$ is loss function for binary labels indicating the source of data and $\alpha$ is the hyperparameter that controls the relative importance of two losses. We optimize $\theta_f, \theta_y$ for minimizing $E$ and optimize $\theta_d$ for maximizing $E$. We jointly optimize $\theta_f, \theta_y, \theta_d$ through the gradient reversal layer\cite{ganin2014unsupervised}.

We use this feature adaptation technique to firstly adapt the source-language classifier to the source-language documents of the parallel corpus. When training the target-language classifier by matching soft labels on the parallel corpus, we also adapt the classifier to the target testing documents. We use cross-entropy loss functions as $L_y$ and $L_d$ for both feature adaptation.

\section{Experiments and Discussions}
\subsection{Dataset}
\label{sec:data}
Our experiments used two benchmark datasets, as described below.

\subsubsection*{(1) Amazon Reviews}

\begin{table}[!htb]
\centering
\resizebox{0.35\textwidth}{!}{%
\begin{tabular}{@{}|lll|@{}}
\toprule
Language & Domain &  \# of Documents \\ \midrule
\multirow{3}{*}{English} & book & 50000 \\
 & DVD & 30000 \\
 & music & 25220 \\ \midrule
\multirow{3}{*}{German} & book & 165470 \\
 & DVD & 91516 \\
 & music & 60392 \\ \midrule
\multirow{3}{*}{French} & book & 32870 \\
 & DVD & 9358 \\
 & music & 15940 \\ \midrule
\multirow{3}{*}{Japanese} & book & 169780 \\
 & DVD & 68326 \\
 & music & 55892 \\ \bottomrule
\end{tabular}
}
\caption{Dataset Statistics for the Amazon reviews dataset}
\label{tab:stat_unlab_amz}
\end{table}

We used the multilingual multi-domain Amazon review dataset created by Prettenhofer and Stein \cite{prettenhofer2010cross}. The dataset contains Amazon reviews in three domains: book, DVD and music. Each domain has the reviews in four different languages: English, German, French and Japanese. We treated English as the source language and the rest three as the target languages, respectively. This gives us 9 tasks (the product of the 3 domains and the 3 target languages) in total. For each task, there are 1000 positive and 1000 negative reviews in English and the target language, respectively. \cite{prettenhofer2010cross} also provides 2000 parallel reviews per task, that were generated using Google Translate \footnote{translate.google.com}, and used by us for cross-language distillation. There are also several thousands of unlabeled reviews in each language. The statistics of unlabeled data is summarized in Table \ref{tab:stat_unlab_amz}. All the reviews are tokenized using standard regular expressions except for Japanese, for which we used a publicly available segmenter \footnote{https://pypi.python.org/pypi/tinysegmenter}.


\subsubsection*{(2) English-Chinese Yelp Hotel Reviews}
This dataset was firstly used for CLTC by \cite{chen2016adversarial}. The task is to make sentence-level sentiment classification with 5 labels(rating scale from 1 to 5), using English as the source language and Chinese as the target language. The labeled English data consists of balanced labels of 650k Yelp reviews from Zhang et al. \cite{zhang2015character}. The Chinese data includes 20k labeled Chinese hotel reviews and 1037k unlabeled ones from \cite{lin2015empirical}. Following the approach by \cite{chen2016adversarial}, we use 10k of labeled Chinese data as validation set and another 10k hotel reviews as held-out test data.
We 
a random sample of 500k parallel sentences from UM-courpus\cite{tian2014corpus}, which is a general-purpose corpus designed for machine translation.

\begin{table*}[!htb]
\centering
\resizebox{\textwidth}{!}{%
\begin{tabular}{llllllllll}
\hline
Target Language & Domain &\vline & PL-LSI & PL-KCCA & PL-OPCA & PL-MC & \vline & CLD-KCNN & CLDFA-KCNN \\ \hline
\multirow{3}{*}{German} & book &\vline & 77.59 & 79.14 & 74.72 & 79.22 & \vline & 82.54 & \textbf{83.95}* \\
 & DVD &\vline & 79.22 & 76.73 & 74.59 & 81.34 &\vline &  82.24 & \textbf{83.14}* \\
 & music &\vline & 73.81 & 79.18 & 74.45 & \textbf{79.39} &\vline &  74.65 & 79.02 \\ \hline
\multirow{3}{*}{French} & book &\vline & 79.56 & 77.56 & 76.55 &  81.92 & \vline & 81.6 & \textbf{83.37} \\
 & DVD &\vline & 77.82 & 78.19 & 70.54 & 81.97 &\vline &  82.41 & \textbf{82.56} \\
 & music &\vline & 75.39 & 78.24 & 73.69 & 79.3 &\vline &  83.01 & \textbf{83.31}* \\ \hline
\multirow{3}{*}{Janpanese} & book &\vline & 72.68 & 69.46 & 71.41 &  72.57 & \vline & 74.12 & \textbf{77.36}* \\
 & DVD &\vline & 72.55 & 74.79 & 71.84 & 76.6 &\vline &  79.67 & \textbf{80.52}* \\
 & music &\vline & 73.44 & 73.54 & 74.96 & 76.21 &\vline &  73.69 & \textbf{76.46} \\ \hline
\multicolumn{2}{l}{Averaged Accuracy} &\vline & 75.78 & 76.31 & 73.64  &  78.72 & \vline & 79.33 & \textbf{81.08}* \\ \hline
\end{tabular}
}
\caption{Accuracy scores of methods on the Amazon Reviews dataset: the best score in each row (a task) is highlighted in bold face. If the score of CLDFA-KCNN is statistically significantly better (in one-sample proportion tests) than the best among the baseline methods, it is marked using a star.}
\label{tab:amz_res}
\end{table*}

\begin{table}[!bht]
\centering
\begin{tabular}{ll}
\hline
Model & Accuracy \\ \hline
mSDA & 31.44\% \\
MT-LR & 34.01\% \\
MT-DAN & 39.66\% \\
ADAN & 41.04\% \\ \midrule
CLD-KCNN & 40.96\% \\ 
CLDFA-KCNN & \textbf{41.82\%} \\ \hline
\end{tabular}
\caption{Accuracy scores of methods on the English-Chinese Yelp Hotel Reviews dataset}
\label{tab:hotel_rev_res}
\end{table}

\subsection{Baselines}

We compare the proposed method with other state-of-the-art methods as outlined below. 

\subsubsection*{(1) Parallel-Corpus based CLTC Methods}
Methods in this category all use an unlabeled parallel corpus. Methods named \textbf{PL-LSI} \cite{littman1998automatic}, \textbf{PL-OPCA} \cite{platt2010translingual} and \textbf{PL-KCAA} \cite{vinokourov2002inferring} learn latent document representations in a shared low-dimensional space by performing the Latent Semantic Indexing (LSI), the Oriented Principal Component Analysis (OPCA) and a kernel (namely KCAA) for the parallel text. 
\textbf{PL-MC} \cite{xiao2013novel} recovers missing features via matrix Completion, and also uses $LSI$ to induce a latent space for parallel text. All these methods train a classifier in the shared feature space with labeled training data from both the source and target languages. 

\subsubsection*{(2) MT-based CLTC Methods}
The methods in this category all use an MT system to translate each test document in the target language to the source language in the testing phase. The prediction on each translated document is made by a source-language classifier, which can be a Logistic Regression model (\textbf{MT+LR}) \cite{chen2016adversarial} or a deep averaging network (\textbf{MT+DAN}) \cite{chen2016adversarial}.

\subsubsection*{(3) Adversarial Deep Averaging Network}
Similar to our approach, the adversarial Deep Averaging Network (\textbf{ADAN}) also exploits adversarial training for CLTC \cite{chen2016adversarial}.  However, it does not have the parallel-corpus based knowledge distillation part (which we do).  Instead, it uses averaged bilingual embeddings of words as its input and adapts the feature extractor to produce similar features in both languages. 

We also include the results of \textbf{mSDA} for the Yelp Hotel Reviews dataset. \textbf{mSDA} \cite{chen2012marginalized} is a domain adaptation method based on stacked denoising autoencoders, which has been proved to be effective in cross-domain sentiment classification evaluations. We show the results reported by \cite{chen2012marginalized}, where they used bilingual word embedding as input for mSDA. 

\subsection{Implementation Detail} 
We pre-trained both the source and target classifier with unlabeled data in each language. We ran word2vec\cite{mikolov2013efficient} \footnote{https://code.google.com/archive/p/word2vec/} on the tokenized unlabeled corpus. The learned word embeddings are used to initialize the word embedding look-up matrix, which maps input words to word embeddings and concatenates them into input matrix.

We fine-tuned the source-language classifier on the English training data with 5-fold cross-validation. For English-Chinese Yelp-hotel review dataset, the temperature $T$(Section \ref{sec:vani_dist}) in distillation is tuned on validation set in the target language. For Amazon review dataset, since there is no default validation set, we set temperature from low to high in $\{ 1, 3, 5, 10 \}$ and take the average among all predictions. 

\subsection{Main Results}

In tables \ref{tab:amz_res} and \ref{tab:hotel_rev_res} we compare the results of our methods (the vanilla version CLD-KCNN and the full version CLDFA-KCNN) with those of other methods based on the published results in the literature. The baseline methods are different in these two tables as they were previously evaluated (by their authors) on different benchmark datasets.  
Clearly, CLDFA-KCNN outperformed the other methods on all except one task in these two datasets, showing that knowledge distillation is successfully carried out in our approach.

Noticing that CLDFA-KCNN outperformed CLD-KCNN, showing the effectiveness of adversarial feature extraction in reducing the distribution mismatch between the parallel corpus and the train/test data in the target domain.
We should also point out that in Table \ref{tab:amz_res}, the four baseline methods (PL-LSI, PL-KCCA, PL-OPCA and PL-MC) were evaluated under the condition of using additional 100 labeled target documents for training, according to the author's report \cite{xiao2013novel}.  On the other hand, our methods (CLD-KCNN and CLDFA-KCNN) were evaluated under a tougher condition, i.e., not using any labeled data in the target domains. 

We also test our framework when a few training documents in the target language are available. A simple way to utilize the target-language supervision is to fit the target-language model with labeled target data after optimizing with our cross-lingual distillation framework. The performance of CLD-KCNN and CLDFA-KCNN trained with different sizes of labeled target-language data is shown in figure \ref{fig:tar_label}. We also compare the performance of training the same classifier using only the target-language labels(\textbf{Target Only} in figure \ref{fig:tar_label}). As we can see, our framework can efficiently utilize the extra supervision and improve the performance over the training using only the target-language labels. The margin is most significant when the size of the target-language label is relatively small.

\begin{figure}[ht]
\centering
\includegraphics[width=0.4\textwidth]{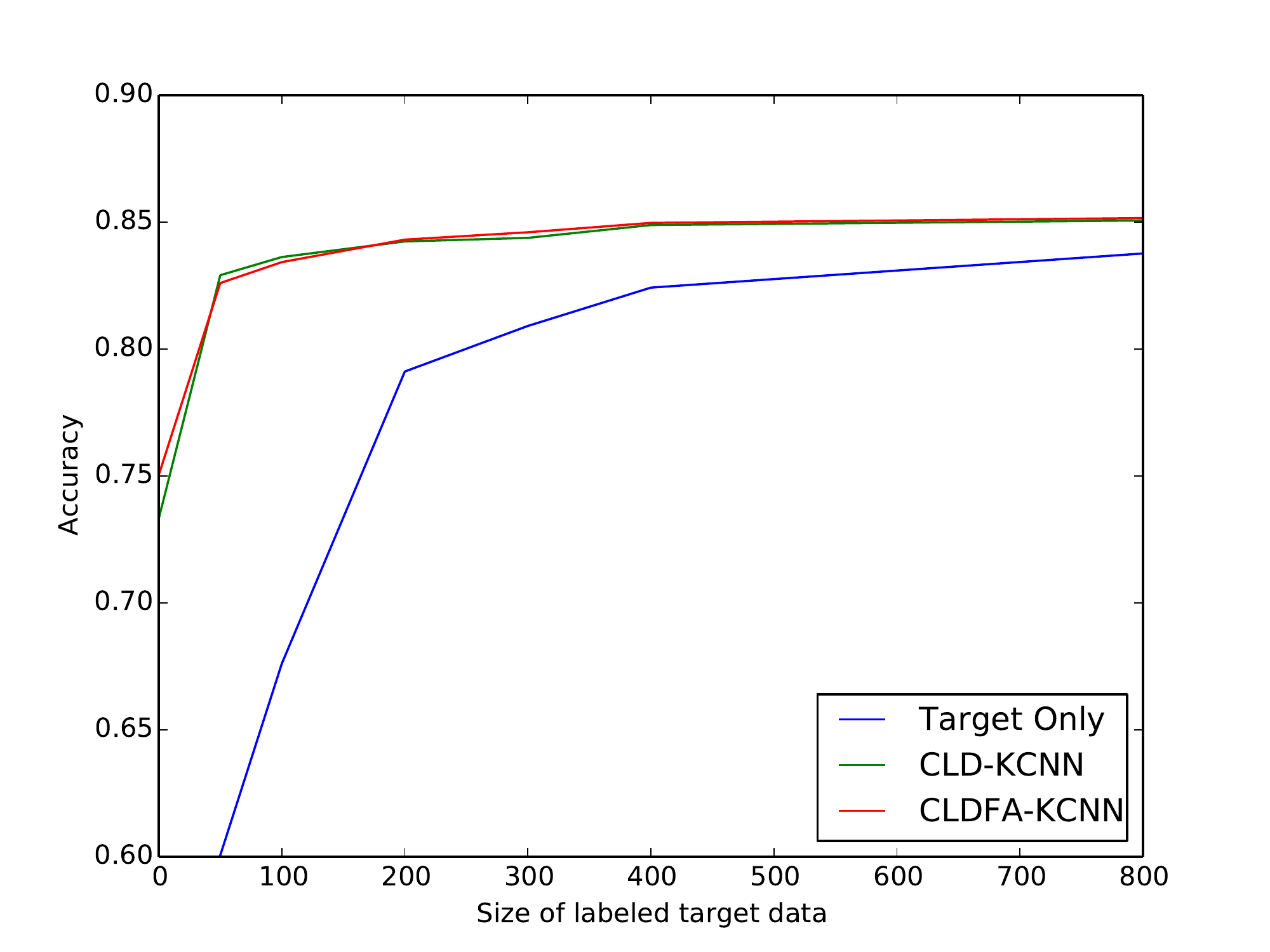}
\caption{Accuracy scores of methods using varying sizes of target-language labeled data on the Amazon review dataset. The target language is German and the domain is music. The parallel corpus has a fixed size of 1000 and the size of the labeled target-language documents is shown on the x-axis}
\label{fig:tar_label}
\end{figure}

\section{Conclusion}
This work introduces a novel framework for distillation of discriminative knowledge across languages, providing effective and efficient algorithmic solutions for addressing domain/distribution mismatch issues in CLTC. The excellent performance of our approach is evident in our evaluation on two CLTC benchmark datasets, compared to that of other state-of-the-art methods. 

\section*{Acknowledgement}
We thank the reviewers for their helpful comments. This work is supported in part by Defense Advanced Research Projects Agency Information Innovation Oce (I2O), the Low Resource Languages for Emergent Incidents (LORELEI) Program, Issued by DARPA/I2O under Contract No. HR0011-15-C-0114, by the National Science Foundation (NSF) under grant IIS-1546329.

\bibliography{acl2017}
\bibliographystyle{acl_natbib}

\end{document}